%
\documentclass[runningheads]{llncs}
\usepackage{graphicx}
\usepackage{amsmath}
\usepackage{amssymb}
\usepackage{bbold}
\usepackage{eqnarray}
\usepackage{bbm}
\usepackage{xcolor}
\usepackage{url}
\usepackage[chapter]{algorithm}
\usepackage{algorithmic}

\newcommand{\ONE}{\mathbb{1}}

\newcommand{\EE}{\mathbb{E}}
\newcommand{\RR}{\mathbb{R}}
\newcommand{\BB}{\mathbb{B}}
\newcommand{\Tau}{\mathbb{T}}

\def\ShowNotes{0}
\newcommand{\blue}[1]{\ifnum\ShowNotes=1{\color{blue} #1} \fi}
\newcommand{\red}[1]{\ifnum\ShowNotes=1{\color{red} #1} \fi}
\newcommand{\green}[1]{\ifnum\ShowNotes=1{\color{green} #1} \fi}

\DeclareMathOperator*{\argmax}{arg\,max}

\newcommand{\strictorder}{\prec}

\begin{document}
\title{Safe Policy Improvement \\ in Constrained Markov Decision Processes}
%
%
\author{
Luigi Berducci\inst{1} \and
Radu Grosu\inst{1}
}
\authorrunning{L. Berducci et al.}
%
\institute{CPS Group, TU Wien, Austria}
\maketitle              

\begin{abstract}

The automatic synthesis of a policy through reinforcement learning (RL) from a given set of formal requirements depends on the construction of a reward signal and consists of the iterative application of many policy-improvement steps. 
The synthesis algorithm has to balance target, safety, and comfort requirements in a single objective and to guarantee that the policy improvement does not increase the number of safety-requirements violations, especially for safety-critical applications. 
In this work, we present a solution to the synthesis problem by solving its two main challenges: reward-shaping from a set of formal requirements and safe policy update.
For the first, we propose an automatic reward-shaping procedure, defining a scalar reward signal compliant with the task specification. For the second, we introduce an algorithm ensuring that the policy is improved in a safe fashion, with high-confidence guarantees.
We also discuss the adoption of a model-based RL algorithm to efficiently use the collected data and train a model-free agent on the predicted trajectories, where the safety violation does not have the same impact as in the real world.
Finally, we demonstrate in standard control benchmarks that the resulting learning procedure is effective and robust even under heavy perturbations of the hyperparameters.

\keywords{Reinforcement Learning \and Safe Policy Improvement \and Formal Specification}
\end{abstract}


\section{Introduction}
Reinforcement Learning (RL) has become a practical approach for solving complex control tasks in increasingly challenging environments. However, despite the availability of a large set of standard benchmarks with well-defined structure and reward signals, solving new problems remains an art.

There are two major challenges in applying RL to new synthesis problems. The first, arises from the need to define a good reward signal for the problem. We illustrate this challenge with an autonomous-driving (AD) application. In AD, we have numerous requirements that have to be mapped into a single scalar reward signal. In realistic applications, more than 200 requirements need to be considered when assessing the course of action \cite{DBLP:conf/icra/CensiSWYPFF19}.
Moreover, determining the relative importance of the different requirements
is a highly non-trivial task.
In this realm, there are a plethora of regulations, ranging from safety and traffic rules to performance, comfort, legal, and ethical requirements.

The second challenge concerns the RL algorithm with whom we intend to search for an optimal policy.
Considering most of the modern RL algorithms, 
especially model-free policy-gradient methods 
\cite{schulman2017_ppo}, 
they require an iterative interaction with the environment, 
from which they collect fresh experiences for further improving the policy.
Despite their effectiveness, a slight change in the policy parameters 
could have unexpected consequences in the resulting performance.
This fact often results in learning curves with unstable performances,
strongly depending on the tuning of hyperparameters and algorithmic design choices \cite{henderson2018_drl_that_matters}.
The lack of guarantees and robustness in the policy improvement still
limits the application of RL outside controlled environments or simulators.

In this paper, we tackle the two problems mentioned above by proposing a complete design pipeline for safe policies: from the definition of the problem to the safe optimization of the policy (Fig. \ref{fig:workflow}). 
We discuss how to structure the problem from a set of formal requirements and enrich the reward signal while keeping a sound formulation.
Moreover, we define the conditions which characterize a correct-by-construction algorithm in this context and demonstrate its realizability with high-confidence off-policy evaluation.
We propose two algorithms, one
model-free and one model-based, respectively.
We formally prove that they are correct by construction and 
evaluate their performance empirically.

\begin{figure}
    \centering
    \includegraphics[width=\textwidth]{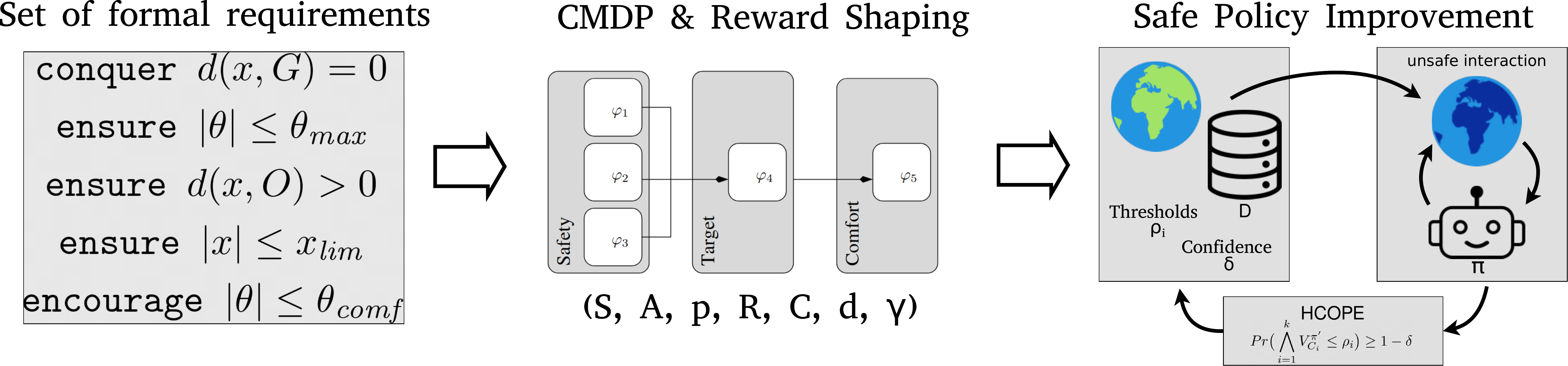}
    \caption{Overall RL pipeline to solve formally-specified control tasks.}
    \label{fig:workflow}
\end{figure}

\section{Motivating Example}
We motivate our work with a \emph{cart-pole example} extended with safety, target, and comfort requirements, as follows: A \emph{pole} is attached to a \emph{cart} that moves between a \emph{left} and a \emph{right} \emph{limit}, within a flat and frictionless environment. The environment has a \emph{target} area within the limits and a static \emph{obstacle} hanging above the track. We define five requirements for the cart-pole, as shown in Table~\ref{tab:reqs}.

\begin{table}[ht]
    \centering
    \begin{tabular}{|l|l|}
    \hline
    Req ID & Description \\
    \hline \hline
    $\text{Req}_1$     &  The cart shall reach the target in bounded time \\
    $\text{Req}_2$     &  The pole shall never fall from the cart \\
    $\text{Req}_3$     &  The pole shall never collide with the obstacle \\
    $\text{Req}_4$     &  The cart shall never leave the left/right limits \\
    $\text{Req}_5$     &  The cart shall keep the pole balanced within a \\
                       &  comfortably small angle as often as possible \\
    \hline
    \end{tabular}
    \caption{Cart-pole example -- informal requirements}
    \label{tab:reqs}
\end{table}

We aim to teach the cart-pole to satisfy all requirements. The system is controlled by applying a continuous \textit{force} to the cart, allowing the left and right movements of the cart-pole with different velocities. In order to reach the goal and satisfy the target requirement $\text{Req}_1$, the cart-pole must do an uncomfortable and potentially unsafe maneuver: since moving a perfectly balanced pole would result in a collision with the obstacle, thus violating the safety requirement $\text{Req}_3$, the cart-pole must lose balancing and pass below it.
Furthermore, if the obstacle is too large, or the cart does not apply enough force once passing the obstacle, it may not be able to reach the target without falling, thus violating the safety requirement $\text{Req}_2$. 
A sequence of pictures showing the cart-pole successfully overcoming the obstacle in the environment is depicted in Figure~\ref{fig:cartpole_storyboard}.

\begin{figure}
    \includegraphics[width=\textwidth]{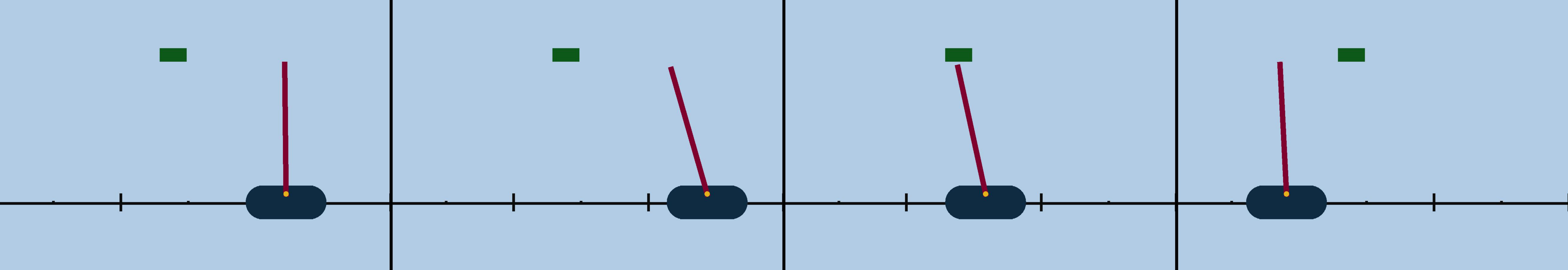}
    \caption{A cart-pole overcomes a hanging obstacle.} 
    \label{fig:cartpole_storyboard}
\end{figure}

Observe that not all requirements have the same importance for this task. We regard the safety requirements as fundamental constraints on the agent's behavior. A safety violation compromises the validity of the entire episode. The target requirement expresses the main objective, and its completion is the agent's reason to be.
Finally, comfort requirements are secondary objectives that should be optimized as long as they do not interfere with the other two classes of requirements.
In the remainder of this paper, we will use this motivating example to illustrate the steps that lead to the proposed methodology.

\section{Related Work}
Safety is a well-known and highly-researched problem in the RL community, and the related literature is wide \cite{DBLP:journals/corr/Brunke2021_SafeLearningInRobotics,DBLP:journals/jmlr/Garcia2015_SurveyInSafeRL}. 
Many different definitions and approaches have emerged over the years, so we will first clarify the interpretations of safety that we adopt.
Much work considers safety as the property of the trained policy, determining whether the agent satisfies a set of constraints. They either try to converge to a safe policy at the end of the training
\cite{Altman1998_CMDP_LagrangianApproach,bertsekas2014_ConstrainedOptimizationAndLagrangeMultiplier,DBLP:journals/corr/Chow2015_RiskConstrainedRL,DBLP:conf/icml/Achiam2017_ConstrinedPolicyOptimization},
or try to ensure safety during the entire training process
\cite{DBLP:journals/corr/Saunders2017_TrialWithoutErrorTowardsSRLViaHumanIntervention,DBLP:conf/nips/Christiano2017_DeepRLFromHumanPreferences,DBLP:journals/corr/Phan2019_NeuralSimplexArchitecture,DBLP:journals/corr/Alshiekh2017_SafeRLViaShielding,DBLP:journals/corr/Dalal2018_SafeExplorationInContinuousActionSpaces,DBLP:journals/corr/ShalevShwartz2016_SafeMultiAgentRLForAD,DBLP:journals/corr/Wilcox2021_LS3LatentSpaceSafeSet,DBLP:journals/ral/Thananjeyan2021_RecoveryRL}.

Placing ourselves in a design perspective, we consider safety as a property of the RL algorithm instead. By safety, we mean that a safe algorithm will not return a policy
with performance below a given threshold, with high probability guarantees.
Building on this interpretation, we will later define the conditions for correct-by-construction RL algorithms. 
In the following, we review the main approaches relevant to our contribution.

\subsubsection{Safe policy improvement}
Early representative of these algorithms
are CPI \cite{kakade2002_cpi,pirotta2013_spi} 
that provide monotonically improving updates
at the cost of enormous sample complexity.
More recently, \cite{DBLP:journals/corr/Schulman2015_TRPO} 
introduced TRPO, 
the first policy-gradient algorithm which 
builds on trust-region methods to 
guarantee policy improvement.
However, its sound formulation must be relaxed
in practice for computational tractability.
The class of algorithms more relevant for our work is based on off-policy evaluation \cite{precup2000_ope,thomas2015_hcpe}.
Seminal works in this field is HCPI \cite{pmlr2015_thomas_hcpi,thomas2015safe}
that use \textit{concentration inequalities} \cite{massart2007_concentration_inequalities} 
to define high-confidence lower bounds on the
\textit{importance sampling} estimates \cite{precup2000_ope}
of the policy performance.
More recently, this class of algorithms has been shown to be part of the general Seldonian framework \cite{thomas2019_nature_paper}. 
We build on this formalism to define our approach and 
differ from the existing work by proposing 
a novel interface with a set of formally specified requirements,
and propose Daedalus-style solutions \cite{pmlr2015_thomas_hcpi} to tackle the problem of iterative improvement 
in an online setting.

\subsubsection{Policy evaluation with statistical model checking}

Statistical guarantees on the policy evaluation 
for some temporal specification
is commonly solved with statistical model checking (SMC) \cite{agha2018_SMC_survey,legay2019_SMC}.
Recent works have proposed SMC for evaluating a NN decision policy operating in an MDP \cite{gros2020deepSMC}.
However, while SMC relies on collecting a large number of executions of the policy in the environment and
hypothesis testing for providing performance bounds,
our off-policy setting tackles a different problem.
In off-policy evaluation, 
the model of the environment is generally unknown and 
we cannot deploy the decision policy in the environment 
because it could be costly or dangerous.
Unlike SMC, we cannot directly collect statistics with the decision policy.
Conversely, we try to approximate the expected performance 
using data collected by another policy.

\subsubsection{RL with temporal logic}
Much prior work adopts \emph{temporal logic} (TL) in RL.
Some of it focuses on the decomposition of a complex task into many sub tasks~\cite{DBLP:journals/corr/Jothimurugan2021_CompositionalRLFromLogicalSpecs,icarte2018ltl-tasks}. Other on
formulations tailored to tasks specified in TL~\cite{DBLP:conf/rss/FuTopcu2014_PACLearningWithTLConstraints,li2018tl-policy-search,Jiang2021TLRewardShaping,DBLP:conf/icml/Icarte2018_reward_machines}.
Several works use the quantitative semantics of TL (i.e., STL and its variants) to derive a reward signal ~\cite{li2017tl-rewards,arxiv/Jones2015RobustSatOfTLSpecViaRL,IROS/Anand2019StructuredRewardShapingUsingSTL}.
However, they describe the task as a monolithic TL specification and 
compute the reward on complete ~\cite{li2017tl-rewards} or truncated trajectories~\cite{IROS/Anand2019StructuredRewardShapingUsingSTL}.
In this work, we use HPRS, introduced in  \cite{berducci2021_hprs} to mitigate the reward sparsity and subsequent credit-assignment problem, which combines the individual evaluation of requirements
into a hierarchically-weighted reward, capturing
the priority among the various classes of requirements.

\subsubsection{Multi-objective RL.} 
Multi-objective RL (MORL) studies the optimization of multiple and often conflicting objectives.
MORL algorithms learn single or multiple policies ~\cite{roijers2013mo-sequential,liu2015morl}. 
There exist several techniques to combine multiple reward signals into a single scalar value (i.e., scalarization), such as linear or non-linear projections ~\cite{natarajan2005multi-criteria,barrett2008multiple-criteria,vanMoffaert2013morl}.
Other approaches formulate structured rewards by imposing or assuming a preference ranking on the objectives and finding an equilibrium among them \cite{icml/GaborKS98MultiCriteriaRL,NIPS/Shelton2000BalancingMultipleSourcesOfRewardInRL,yun2010ranking,PMLRr/Abels19DynamicWeightsInMODRL}.
\cite{DBLP:conf/ijcnn/Brys2014_MultiObjectivizationOfRLProblems}
proposes to decompose the task specification into many requirements.
However, they do not consider any structured priority among requirements
and rely on the arbitrary choice of weights for each of them.
In general, balancing between safety and performance shows connections with MORL. However, we focus on the safety aspects and how to guarantee safety below a certain threshold with high probability.
These characteristics are not present in MORL approaches.

\subsubsection{Hierarchically Structured Requirements.}
The use of partially ordered requirements to formalize complex tasks
has been proposed in a few works.
The \textit{rulebook} formalism ~\cite{DBLP:conf/icra/CensiSWYPFF19} represents a set of prioritized requirements, and it has been used for evaluating the behaviors produced with a planner ~\cite{DBLP:conf/icra/CensiSWYPFF19}, 
or generating adversarial testing scenarios ~\cite{DBLP:conf/rv/ViswanadhaRV2021_MultiObjectiveFalsificationScenicVerifAI}.
In~\cite{journals/ral/Puranic2021LerarningFromDemonstrationUsingSTLInStochAndContDomains}, a complementary inverse RL approach proposes learning dependencies for formal requirements starting from existing demonstrations. 
Conversely, we use the requirements to describe our task and
define a Constrained MDP, and then focus on the design of a safe policy-improvement algorithm.

\subsubsection{Model-based reinforcement learning}
Among the first representatives of model-based RL algorithms is PILCO \cite{deisenroth2011pilco}, 
which learned a Gaussian process for low-dimensional state systems. 
More recent works showed that it is possible to exploit
expressive neural-networks models to learn complex dynamics in robotics systems \cite{nagabandi2018neural}, and use them for planning \cite{chua2018pets} or policy learning \cite{janner2019mbpo}.
In the context of control from pixel images, 
world models \cite{ha2018world} proved that it is possible to learn 
accurate dynamic models for POMDPs by using noisy high-dimensional observations instead of accurate states.
Their application in planning \cite{hafner2019_planet}, and later in policy learning \cite{hafner2019_dreamer}, have achieved the new state-of-the-art performance in many benchmarks and were recently applied to real-world robots \cite{brunnbauer2021_racing_dreamer}.

\section{Preliminaries}

\subsection{Reinforcement Learning}
Reinforcement Learning (RL) aims to infer an intelligent agent's policy that takes actions in an environment in a way that maximizes some notion of expected return. 
The environment is typically modeled as a Markov decision process (MDP), and the return is defined as the cumulative discounted reward.

\begin{definition}
A Markov Decision Process (MDP) is a tuple
$M = (S, A, p, R, \gamma)$, where $S$ is a set of
states; $A$ is a set of possible actions;
$p : S \times A \times S \rightarrow [0, 1]$ is a transition probability function
(where $p(s_{t+1}|a_t, s_t)$ describes the probability of arriving in state $s_{t+1}$ if
action $a_t$ was taken at state $s_t$); 
$R : S \times A \times S \rightarrow \RR $ is a deterministic reward
function, assigning a scalar value to a transition;
$\gamma \in [0,1]$ is the discount factor that balances the importance of achieving future rewards.
\end{definition}

In RL, one aims to find a policy $\pi : S \times A \rightarrow [0, 1]$ 
which maps states to action probabilities, such that it maximizes
the expected sum of rewards collected over episodes (\textit{or trajectories}) $\tau$:
$$
\pi^{\star} = \argmax_{\pi} \EE_{\tau \sim \mu(\cdot|\pi)} \big[ \sum_{t=0}^{\infty} \gamma^t R(s_t,a_t,s_{t+1}) \big] ,
$$
where $\mu(\tau|\pi)$ represents the distribution 
over episodes observed when sampling actions from some policy $\pi$,
and $\tau \sim \mu(\cdot|\pi)$ denotes an episode that was sampled from this distribution.

Conventional RL approaches do not explicitly consider safety constraints in MDPs. Constrained MDPs \cite{Altman1999_CMDP} extend the MDP formalism
to handle such constraints, resulting in the tuple $M = (S, A, p, R, C, \gamma, d)$,
where $C : S \times A \times S \rightarrow \RR $ is a cost function and 
$d \in \RR $ is a cost threshold. 
We aim to teach the agent to \textit{safely} interact with the environment.
More concretely, 
considering the episodes $\tau \sim \mu(\cdot|\pi)$,
the expected cumulative cost must be below the threshold $d$; 
that is, the constraint is $\EE_{\tau \sim \mu(\cdot|\pi)} \big[ \sum_{t=0}^{\infty} \gamma^t C(s_t,a_t,s_{t+1},) \big] \leq d$.
Formally, the constrained optimization problem consists of finding $\pi^{\star}$ such that:
\begin{align*}
\pi^{\star} = 
&\argmax_{\pi} \EE_{\tau \sim \mu(\cdot|\pi)} \big[ \sum_{t=0}^{\infty} \gamma^t R(s_t,a_t,s_{t+1},) \big] \\
&\text{subject to } \EE_{\tau \sim \mu(\cdot|\pi)} \big[ \sum_{t=0}^{\infty} \gamma^t C(s_t,a_t,s_{t+1},) \big] \leq d
\end{align*}

We denote the expected cumulative discounted reward and cost as $V^{\pi}$ and $V_C^{\pi}$, respectively.
When dealing with multiple constraints, we use $C_i$, $d_i$, and $V_{C_i}^{\pi}$ to denote the $i$-th cost function, its threshold, and expected cumulative cost.

\subsection{Hierarchical Task Specifications}

\subsubsection{Requirements specification}

In \cite{berducci2021_hprs}, we formally define a set of expressive operators to capture requirements often occurring in control problems.
Considering atomic predicates $p \doteq f(s)\,{\geq}\,0$ over observable states $s\,{\in}\,S$, 
we extend existing task-specification languages 
(e.g., SpectRL \cite{DBLP:journals/corr/Jothimurugan2021_CompositionalRLFromLogicalSpecs})
and define requirements as: 
\begin{align}
\label{task:syntax}
\begin{split}
\varphi  \doteq  \ \texttt{achieve}\ p  ~|~ \texttt{conquer}\ p ~|~
                 \  \texttt{ensure}\ p  ~|~ \texttt{encourage}\ p 
\end{split}
\end{align}
Commonly, a task can be defined as a set of requirements from three basic classes: {\em safety}, {\em target}, and {\em comfort}. Safety requirements, of the form $\texttt{ensure}\ p$, are naturally associated to an invariant condition $p$. Target requirements, of the form  $\texttt{achieve}\ p$ or  $\texttt{conquer}\ p$, formalize the one-time or respectively the persistent achievement of a goal within an episode. Finally, comfort requirements, of the form $\texttt{encourage}~p$, introduce the soft satisfaction of $p$, as often as possible, without compromising task satisfaction.

Let $\Tau$ be the set of all finite episodes of length $T$. Then, each requirement $\varphi$ induces a Boolean function $\sigma:\,\Tau\,{\rightarrow}\,\BB$ evaluating whether an episode $\tau\,{\in}\,\Tau$ satisfies the requirement $\varphi$. Formally, given a finite episode $\tau$ of length $T$, 
we define the requirement-satisfaction function $\sigma$ as follows:
\begin{align*}
&\tau \models \texttt{achieve}\ p &    &\text{ iff } \exists i \leq T, \tau_i \models p \\
&\tau \models \texttt{conquer}\ p &    &\text{ iff } \exists i \leq T, \forall j \geq i, \tau_j \models p  \\
&\tau \models \texttt{ensure}\ p  &    &\text{ iff } \forall i \leq T, \tau_i \models p \\
&\tau \models \texttt{encourage}\ p &    &\text{ iff } \textsf{true}
\end{align*}

We explain below the proposed evaluation of comfort requirements.
In our interpretation, they represent secondary objectives, and their satisfaction does not alter the truth of the evaluation.
In fact, as long as the agent is able to safely achieve the target, we consider the task satisfied.
For this reason, the satisfaction of comfort requirement always evaluates to true.
To further clarify the proposed specification language, 
we formalize the requirements for the running example.

\begin{example}
\label{ex:task}
Consider the motivating cart-pole example. Now let us give the formal specification of its requirements. The state is the tuple $(x, \dot{x}, \theta, \dot{\theta})$, where $x$ is the position of the cart, $\dot{x}$ is its velocity, $\theta$ is the angle of the pole to the vertical axis, and $\dot{\theta}$ is its angular velocity. We first define: (1)~the angle $\theta_{\emph{max}}$ of the pole at which we consider the pole to fall from the cart; (2)~the maximum angle $\theta_{\emph{comf}}$ of the pole that we consider to be comfortable; (3)~the world limit $x_{\emph{lim}}$; (4)~the position $G$ of the goal; (5)~the set of points $O$ defining the static obstacle; and (6)~a distance function $d$ between locations in the world (e.g., euclidean distance), that, with a slight abuse of notation, we extend to measure the distance between the cart position and a set of points. 
Then, the task can be formalized with the requirements
reported in Table~\ref{tab:requirements}.

\begin{table}[t]
    \caption{Cart-pole example -- formalized requirements}
    \label{tab:requirements}
    \centering
    \begin{tabular}{|l|l|l|l|}
    \hline
    Req Id & Formula Id & Formula \\
    \hline \hline
    Req1 & $\varphi_1$ & $\texttt{conquer } d(x,G) = 0$ \\
    Req2 & $\varphi_2$ & $\texttt{ensure } |\theta| \leq \theta_{\emph{max}}$ \\
    Req3 & $\varphi_4$ & $\texttt{ensure } d(x,O) > 0$ \\
    Req4 & $\varphi_3$ & $\texttt{ensure } |x| \leq x_{\emph{lim}}$ \\
    Req5 & $\varphi_5$ & $\texttt{encourage } |\theta| \leq \theta_{\emph{comf}}$ \\
    \hline
    \end{tabular}
\vspace*{-3ex}
\end{table}
\end{example}

\subsubsection{Task as a Partially-Ordered Set}

We formalize a task by a set of formal requirements $\Phi$, assuming that the target is unique and unambiguous. Formally, $\Phi = \Phi_S \uplus \Phi_T \uplus \Phi_C$ such that:
$$
\begin{array}{lc}
    \Phi_S := \{ \varphi \, | \,  \varphi \doteq \texttt{ensure}\ p \} \\
    \Phi_C := \{ \varphi \, | \,  \varphi \doteq \texttt{encourage}\ p \} \\
    \Phi_T := \{ \varphi \, | \, \varphi \doteq \texttt{achieve}\ p \,\vee\, \varphi \doteq \texttt{conquer}\ p \} 
\end{array}
$$
\noindent The target requirement is required to be unique ($|\Phi_T| = 1$). 



We use a very natural interpretation of importance among the class of requirements, which considers decreasing importance from safety, to target, and to comfort requirements.
Formally, this natural interpretation of importance defines a (strict) partial order relation $\strictorder$ on $\Phi$ as follows:
$$
\varphi \strictorder \varphi' \text{ iff } \left( \varphi \in \Phi_S \wedge \varphi' \not \in \Phi_S\right) \vee \left( \varphi \in \Phi_T \wedge \varphi' \in \Phi_C \right)
$$

%

The resulting pair $(\Phi, \strictorder)$ forms a partially-ordered set of requirements and defines our task. Extending the semantics of satisfaction to a set, we consider a task accomplished when all of its requirements are satisfied:
\begin{equation}
\label{eq:task_sat}
\tau \models \Phi \text{ iff } \forall \varphi \in \Phi, \tau \models \varphi.    
\end{equation}

\section{Contribution}

In this section, we present the main contribution of this work: a correct-by-construction RL pipeline to solve formally-specified control tasks.
First, we formalize a CMDP from the set of requirements,
providing the intuition behind its sound formulation.
Then, we describe the potential-based reward proposed in \cite{berducci2021_hprs} 
that we use to enrich the learning signal and still benefit from
correctness guarantees.
Finally, we present an online RL algorithm that iteratively updates a policy while maintaining the performance for safety requirements.

\subsection{Problem Formulation}
\label{sec:problem_formulation}

The environment is considered a tuple $E\,{=}\,(S, A, p)$, 
where $S$ is the set of states, $A$ is the set of actions, 
and $p(s_{t+1}|s_t, a_t)$ is its dynamics, that is, the probability of reaching state $s'$ by performing action $a$ in state $s$.
Given a task specification $(\Phi, \strictorder)$, where $\Phi = \Phi_S \uplus \Phi_T \uplus \Phi_C$,
we define a CMDP $M\,{=}\,(S, A, p, R, C, \gamma, d)$ by 
formulating its reward $R$ and cost functions $C_i$ to reflect the semantics of $(\Phi, \strictorder)$.

We consider episodic tasks of length $T$, where the episode ends when the task satisfaction is decided: either through a safety violation, a timeout, or the goal achievement. 
When one of these events occurs, we assume that the MDP is entering a \textit{final} absorbing state $s_f$, where the decidability of the episode cannot be altered anymore.
The goal-achievement evaluation depends on the target operator adopted: 
for $\texttt{achieve}\ p$ the goal is achieved when visiting at time $t\,{\leq}\,T$ a state $s_t$ such that $s_t \models p$; 
for $\texttt{conquer}\ p$ the goal is achieved if there is a time $i\,{\leq}\,T$ such that for all $i\,{\leq}\,t\,{\leq}\,T$, $s_t \models p$.

We adopt a straightforward interface for safety requirements,
requiring the cost function $C_i(s, a, s')$ to be a binary indicator of the violation of the $i$-th safety requirement $\phi_i \in \Phi_S$ when entering $s'$ from $s$: 
0 if the current state satisfies $\phi_i$ and $1$ if the current state violates $\phi_i$.
We bound the expected cumulative discounted cost 
by $d_i$, a user-provided safety threshold that depends on the specific application considered:

$$V_{C_i}^\pi = \EE_{\tau \sim \mu(\cdot|\pi)} [ \sum_{t=0}^{T} \gamma^t C_i(s_t, a_t, s_{t+1}) ] \leq d_i $$ 

The choice of this cost function promotes simplicity,
requiring users to only be able to detect safety violations and relieving them from the burden of defining more complex signals.
Moreover, the resulting cost metric $V_{C_i}^\pi$ reflects the failure probability discounted over time by the factor $\gamma$ and makes the choice of the threshold $d_i$ more intuitive by interpreting it in a probabilistic way.

We complete the CMDP formulation by defining a sparse reward, incentivizing goal achievement. 
Let $p$ be the property of the unique target requirement. 
Then, for safe transitions $\langle s_t, a_t, s_{t+1} \rangle$, we define the following reward signal:
\begin{align*}
& R(s_t, a_t, s_{t+1}) =
\left\{ 
  \begin{array}{ c l }
    1           & \quad \textrm{if } s_{t+1} \models p \\
    0           & \quad \textrm{otherwise}
  \end{array}
\right.
\end{align*}


The rationale behind this choice is that the task's satisfaction depends 
on the satisfaction of safety and target requirements.
In the same way, the reward $R$ incentives safe transitions to states that satisfy the target requirement, and the violation of any safety requirements terminates the episode, precluding the agent from collecting any further reward.
It follows that $R$ incentives to reach the target and stay there as long as possible, in the limit until $T$.

\subsection{Reward Shaping}
\label{reward_shaping}
We additionally define a shaped reward with HPRS \cite{berducci2021_hprs}
to provide a dense training signal and speed-up the
learning process.
Since we consider a CMDP and model the safety requirements as constraints, we restrict the HPRS definition to target and comfort requirements.

We consider predicates $p(s)\,{\doteq}\,f(s)\,{\geq}\,0$, 
where the value $f(s)$ is bounded in $[m, M]$ for all states $s$. 
Let $f_{-}(s)\,{=}\,min(0,f(s))$ and $\Phi_{tc} = \Phi_T \uplus \Phi_C$.
We use the continuous normalized signal $r(\varphi, s) = 1 - \frac{f_{-}(s)}{m}$
to define the potential shaping function $\Psi$ as follows:
$$
\Psi(s) = \sum_{\varphi \in \Phi_{tc}}  \left( \prod_{\varphi' \in \Phi_{tc} : \varphi' \strictorder \varphi} r(\varphi', s) \right) \cdot r(\varphi, s)
$$

This potential function is a weighted sum over all scores $r(\varphi,s)$ for target and comfort requirements, where the weights are determined by a product of the scores of all specifications that are strictly more important (hierarchically) than $\varphi$. Crucially, these weights adapt dynamically at every step.

\begin{corollary}
The optimal policy for the CMDP $M'$, where its reward $R'$ is defined as below:
\begin{equation}
    \label{eq:hrs}
    R'(s_t, a_t, s_{t+1}) = R(s_{t}, a_{t}, s_{t+1}) + \gamma \Psi(s_{t+1}) - \Psi(s_{t})
\end{equation}
is also an optimal policy for the CMDP $M$ with reward $R$.
\end{corollary}

The corollary stated in \cite{berducci2021_hprs} follows
by the fact that $\Psi$ is a potential function (i.e.,  depends only on the current state) and by the results in \cite{ICML/Ng19999PolicyInvarianceUnderRewardTransformation}.
This result remark that the proposed reward shaping is correct
since it preserves the policy optimality of the CMDP $M$.

The proposed hierarchical-potential signal has a few crucial characteristics.
First, it is a potential function and can be used 
to augment the original reward signal 
without altering the optimal policy of the resulting CMDP.
Second, it is a multivariate signal that combines target and 
comfort objectives with multiplicative terms.
A linear combination of them, as typical in multi-objective scalarization,
would assume independence among objectives.
Consequently, any linear combination would not be expressive enough
to capture the interdependence between requirements.
Finally, the weights dynamically adapt at every step according to the satisfaction degree of the requirements.

\subsection{Safe Policy Improvement in Online Setting}

This section presents an online RL algorithm that uses a correct-by-construction policy-improvement routine.
While this approach is general enough to be used on any CMDP, 
we use it with the shaped reward signal and costs presented in the previous section.

At each iteration, the algorithm performs a correct-by-construction refinement of the current policy $\pi$, with high probability. 
This means that, with high probability, the algorithm returns a policy $\pi'$ whose safety performance is not worse than that of $\pi$.
Since we are working in a model-free setting, with access to only a finite
amount of off-policy data, we can define correct-by-construction only up to certain confidence $\delta$.
Below is the formal definition.

\begin{definition}
Let $M$ be a CMDP with cost functions $C_i$ for $i \in [1,k]$,
$\pi$ a policy, 
and $D_n \sim \mu(\cdot | \pi)$ a finite dataset of experiences collected with $\pi$. A policy-improvement routine $\mathcal{A}$ is correct-by-construction if
for any $\delta$:

\begin{align*}
\pi' = \mathcal{A}(\pi, D_n, \delta) \quad \text{ s.t. } \quad   
Pr( \bigwedge_{i=1}^k V_{C_i}^{\pi'} \leq V_{C_i}^{\pi} ) \geq 1 - \delta
\end{align*}

\end{definition}

Having defined what correct-by-construction means, we describe how we can build an update mechanism to prevent the deployment of an unsafe policy.

\subsubsection{High-confidence off-policy policy evaluation}
Before releasing a policy $\pi'$ for deployment in the environment,
we need to estimate its safety performance using a set of trajectories $D_{n}$ 
collected with the previous deployed policy $\pi$.
We assume to know a threshold $\rho_{+}$, being either an acceptable upper bound for the cost in our application or an estimate of the performance of the last deployed policy $\pi$.
We aim to evaluate a candidate policy $\pi'$ and check if its expected cumulative costs are below the thresholds with a probability of at least $1\,{-}\,\delta$.

We use \textit{importance sampling} to produce an unbiased estimator of $\rho(\pi')$ over a trajectory $\tau$ collected by running $\pi$.
The estimator is defined as $$\hat{\rho}(\pi' | \tau, \pi) = 
\prod_{t=1}^{|\tau|} \frac{\pi'(a_t|s_t)}{\pi(a_t|s_t)}
\big( \sum_{t=1}^{|\tau|} \gamma^t C(s_t,a_t,s_{t+1}) \big)
$$

Computing the \textit{importance weighted returns} gives us unbiased estimators of the safety performance \cite{precup2000_ope}.
The mean over estimators from $n$ trajectories in $D_{n}$ is also unbiased, $\hat{\rho} = \frac{1}{n} \sum_{i}^{n} \hat{\rho}(\pi' | \tau_i, \pi)$.
However, we want to provide statistical guarantees regarding the resulting value.
We use the one-sided Student-t test to obtain a $1-\delta$ confidence upper bound on $\hat{\rho}$.

Let $\hat{\rho}_i$ be the $i$-th unbiased estimator obtained by importance sampling,
the Student-t test defines:
$$\hat{\rho} = \frac{1}{n} \sum_{i}^{n} \hat{\rho}_i,$$
$$\sigma = \sqrt{ \frac{1}{n - 1} \sum_{i}^{n} (\hat{\rho}_i - \hat{\rho} )^2 }$$
and proves that with probability $1-\delta$ that:
$$\rho \leq \hat{\rho} + \frac{\sigma}{\sqrt{m}} t_{1-\delta, n-1} $$
where $t_{1-\delta, n-1}$ is the $(1-\delta)100$ percentile of the Student's t distribution 
with $n-1$ degrees of freedom.
Under the assumption of normally distributed $\hat{\rho}$, 
which is a reasonable assumption \cite{thomas2015_hcpe} for 
large $n$ by \textit{central limit theorem} (CLT),
we use the Student's t-test to obtain a guaranteed upper bound on $\rho$.
If the upper bound is below the threshold $\rho_{+}$,
we can release the current policy $\pi$ in the environment
because we know that its expected cumulative cost is not higher than $\rho_{+}$ 
with probability at least $1-\delta$.

\subsubsection{Safe model-free policy improvement (SMFPI)}
Among the most successful approaches in model-free optimization
are \textit{policy gradient} methods \cite{schulman2017_ppo}.
They update the policy by estimating the policy gradient 
over a finite batch of episodes $D_n$.
A typical gradient estimator for a policy $\pi_\theta$ parametrized by $\theta$
is
$$
\hat{g} = \EE_{(s_t, a_t) \in D_n} 
\big[ \nabla_\theta \text{ log } \pi_\theta(a_t | s_t) \hat{A}_t \big]
$$
where $\hat{A}_t$ is an estimator of the advantage function and, in its simplest form, corresponds to the discounted cumulative return.

The main limitation of these approaches is due to the on-policy nature of policy gradient methods.
At each policy update, they must collect new data by interacting with the environment to compute the gradients.
Reusing the same trajectories to perform many updates is not theoretically justified and can perform catastrophic policy updates in practice.
The first algorithm we propose, SMFPI,
directly uses this policy-gradient update and estimates the return as the sum of the shaped rewards, as presented in the previous section.
The following subsection discusses a model-based solution to improve data efficiency by learning a dynamics model.

\subsubsection{Safe model-based policy improvement (SMBPI)}
Model-based algorithms are known to be more sample efficient than model-free ones. 
The data collected over the training process can be used to fit a dynamics model that serves as a simulator of the real environment.

Despite the variety of model-based approaches, we use the dynamics model for training a model-free agent on predicted trajectories, reducing the required interactions with the real environment.
The dynamics model is reusable over many iterations, and for this reason, we consider it a data-efficient alternative.

\paragraph{Learning the dynamics.}

We consider the problem of learning an accurate dynamical model.
Traditional approaches use Bayesian models (e.g., GPs) \cite{deisenroth2011pilco} for their efficiency in low-data regimes,
but training on a large dataset and high-dimensional data is prohibitive.
Modern literature in deep RL \cite{chua2018pets} suggests that using an ensemble of neural networks 
can produce competitive performance with scarce data and efficiently scale to large-data regimes.

Since we intend to represent a potentially stochastic state-transition function, capturing the noise of observations and process,
we train a model to parametrize a probability distribution.
We assume a diagonal Gaussian distribution model $p_\theta$
and let a neural network predict the mean $\mu_{\theta}(s, a)$ and the
log standard deviation $log\,\sigma_{\theta}(s, a)$.
Instead of learning to predict the following state $s_{t+1} = p_\theta(s_t, a_t)$,
we train our dynamics model to predicts the change with respect to the current state $\Delta s_{t+1} = p_\theta(s_t, a_t)$ \cite{deisenroth2013gaussian,nagabandi2018neural}.

However, training on scarce data may lead to overfitting and extrapolation errors in the area of the state-action space that are not sufficiently supported by the collected experience.
A common solution to prevent the algorithm from exploiting these regions
consists of adopting an ensemble of models $\{p_{\theta_i}\}_{i=1}^{m}$.
With this ensemble representing a finite set of plausible dynamics of our system, we predict the state trajectories and propagate the dynamics uncertainty by shooting $N$ particles from the current state $s_t$.
At each timestep, we uniformly sample one of the ensemble's models.
The choice of the ensemble size (i.e., the number $m$ of dynamics models) is crucial to capture the uncertainty in the underlying stochastic dynamics adequately. 
While a small ensemble might introduce a significant model bias in policy optimization, a large ensemble increases the computational and memory cost for learning and storing the models. 

\paragraph{Policy optimization with learned dynamics.}
The dynamical model defines an MDP that approximates the real environment. This provides a simulator 
from which we can sample plausible trajectories
without harmful interaction.

Starting from true states sampled from
our buffer of past experiences,
we generate predictions using the dynamical
model 
$\hat{s}_{t+1} \sim p_\theta(\hat{s}_t, a_t)$,
for action $a_t \sim \pi(\hat{s}_{t})$.
We assume the reward function $r_t = R(\hat{s}_{t}, a_t, \hat{s}_{t+1})$ and 
the cost functions 
$c_{i,t} = C_i(\hat{s}_{t}, a_t, \hat{s}_{t+1})$
to be known.
In general, the learned model could predict them, so in the following, we refer to the predictive model
as able to generate state, reward and costs, i.e., $(s_{t+1}, r_t, c_{i,t}) \sim p_\theta(s_t, a_t)$.

Considering the predicted trajectories over a finite horizon $H$, we use model-free RL to train the policy.
This approach is agnostic to the specific algorithm adopted. 
We consider two approaches for dealing with safety:

\begin{enumerate}
    \item \textit{Pessimistic Reward.} We alter the MDP transitions when visiting unsafe states (where the cost exceeds the threshold). In this case, the system enters an absorbing state where it repeatedly collects a high penalty $-C$.
    \item \textit{Constrained optimization.} We define the unconstrained Lagrangian objective
    $$V(\pi) - \sum_{i=1}^{C} \lambda_i V_{C_i}(\pi)$$
    for some multiplier $\lambda_i$.
    Using gradient-based optimization, we iteratively update the policy parameters and the dual variables $\lambda_i$.
\end{enumerate}

\subsubsection{Correct-by-construction RL algorithms.}

Having introduced various algorithmic solutions to tackle the policy optimization process,
we report the pseudocode of the complete algorithm in Algorithm \ref{algo}.
Then, we state an important result derived from adopting high-confidence off-policy evaluation.

\begin{theorem}
Algorithm~\ref{mfree_policy_improvement} (SMFPI) and Algorithm~\ref{mbased_policy_improvement} (SMBPI) are correct-by-construction policy-improvement routines.
\end{theorem}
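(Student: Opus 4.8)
The plan is to prove both algorithms correct simultaneously by observing that their safety guarantee is entirely decoupled from the candidate-generation mechanism. SMFPI produces a candidate policy by a policy-gradient step on the shaped reward, while SMBPI produces one by model-based training on the learned dynamics; but in both cases the returned policy is admitted only after passing the \emph{same} high-confidence off-policy safety gate described above. Hence it suffices to argue that this gate enforces the correct-by-construction inequality, irrespective of how the candidate $\pi'$ was obtained.

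First I would fix the threshold used by the gate to be the true baseline cost, $\rho_{+} = V_{C_i}^{\pi}$, for each constraint $i \in [1,k]$. For a fixed candidate $\pi'$, importance sampling on the off-policy dataset $D_n$ yields unbiased estimators of $V_{C_i}^{\pi'}$, and the one-sided Student-t test produces an upper confidence bound of the form $\hat{\rho} + \frac{\sigma}{\sqrt{n}}\, t_{1-\delta_i,\, n-1}$, which I denote $U_i$, satisfying $\PP(V_{C_i}^{\pi'} > U_i) \leq \delta_i$. The gate admits $\pi'$ only if $U_i \leq V_{C_i}^{\pi}$ holds for every $i$. Consequently, conditioned on admission, a single violation event $\{V_{C_i}^{\pi'} > V_{C_i}^{\pi}\}$ is contained in $\{V_{C_i}^{\pi'} > U_i\}$, so $\PP(V_{C_i}^{\pi'} > V_{C_i}^{\pi}) \leq \delta_i$.

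Next I would close the argument with a union bound over the $k$ constraints. Allocating the confidence budget as $\delta_i = \delta/k$ and applying Boole's inequality gives
$$\PP\Big( \bigvee_{i=1}^k V_{C_i}^{\pi'} > V_{C_i}^{\pi} \Big) \leq \sum_{i=1}^k \delta_i = \delta,$$
hence $\PP\big( \bigwedge_{i=1}^k V_{C_i}^{\pi'} \leq V_{C_i}^{\pi} \big) \geq 1-\delta$, which is exactly the required property. It remains to cover the complementary branch in which the gate rejects $\pi'$: here the routine falls back to returning the incumbent $\pi$, for which $V_{C_i}^{\pi} \leq V_{C_i}^{\pi}$ holds deterministically for all $i$, so the inequality is satisfied with probability $1$. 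Since this reasoning never refers to the shaped reward or the learned dynamics, it applies verbatim to both SMFPI and SMBPI.

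The hard part will be justifying the validity of the Student-t bound rather than the bookkeeping above. Two assumptions must be discharged: (i) the per-trajectory estimators must be i.i.d.\ and the candidate $\pi'$ must be statistically independent of the trajectories fed to the test, otherwise the bound is applied to a data-dependent target and loses its coverage --- I would enforce this by a Seldonian-style split of $D_n$ into a candidate-selection set and a disjoint safety-test set, passing only the latter to the gate; and (ii) the Student-t guarantee is exact only under normality of $\hat{\rho}$, which here holds approximately for large $n$ via the central limit theorem, so the stated high-confidence claim is asymptotic in the sample size. With these caveats made explicit, the per-constraint coverage, the union bound over constraints, and the safe fallback combine to yield the theorem.
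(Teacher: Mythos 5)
Your proof takes essentially the same route as the paper's: the same three ingredients --- the trivial fallback case when the gate rejects and the incumbent $\pi$ is returned, the per-constraint Student-t upper confidence bound at level $\delta/k$, and the union bound over the $k$ constraints to obtain the joint guarantee at level $\delta$. Your additional caveats (the Seldonian train/test split ensuring the candidate is independent of the safety-test data, and the asymptotic nature of the normality assumption via the CLT) are in fact present in the paper's algorithms and surrounding discussion rather than in its proof, so they sharpen the exposition but do not constitute a different argument.
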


\begin{proof}
Using the results from high-confidence off-policy evaluation literature \cite{thomas2015safe},
we demonstrate that each policy $\pi'$ returned by SMFPI or SMBPI
has expected cost less than or equal to the expected cost of the initial policy $\pi$
with probability at least $1 - \delta$.
If $\pi'=\pi$, then the condition is trivially satisfied, so 
let us assume $\pi'$ to be a different policy.
According to the algorithm, $\pi'$ satisfies:
$$
\rho^i \leq \rho_{+}^i ,~ \forall i \in [1,k]
$$
By definition of $\rho^i$ using Student-t test and confidence $\frac{\delta}{k}$,
and under the assumption of normally-distributed cost sample means,
we know that for each $i$:
$$
Pr \big( V_{C_i}^{\pi'} \leq V_{C_i}^{\pi} \big) \geq 1 - \frac{\delta}{k}
$$
and equivalently that
$$
Pr \big( V_{C_i}^{\pi'} > V_{C_i}^{\pi} \big) < \frac{\delta}{k}
$$
Using the Union Bound, we finally show that the probability of the event 
in which at least one of the costs violates the correctness condition is at most $\delta$.
$$
Pr \big( \bigvee_{i=1}^k V_{C_i}^{\pi'} > V_{C_i}^{\pi} \big) < 
\sum_{i=1}^k Pr \big( V_{C_i}^{\pi'} > V_{C_i}^{\pi} \big) = \delta
$$
This statement is equivalent to the following condition
which concludes the proof.
$$
Pr \big( \bigwedge_{i=1}^k V_{C_i}^{\pi'} \leq V_{C_i}^{\pi} \big) \geq 1 - \delta.
$$
\end{proof}

\begin{algorithm} 
\caption{Safe Policy Optimization} 
\label{algo} 
\begin{algorithmic} 
    \REQUIRE Initial policy $\pi$, confidence $\delta$ for policy improvement
    \STATE Initialize prediction model $p_\theta$, empty buffer $D$ and $D_{test}$
    \FOR{$N$ epochs}
        \STATE Collect data with $\pi$ in environment: $$D = D \cup \{ (s_t, a_t, s_{t+1}, r_t, c_t^1, ..., c_t^k, \text{log } \pi(a_t|s_t) ) \}_t$$
        \STATE Estimate safety performance of current policy $\rho_{+}^i(\pi) ,~ \forall i \in [1,k]$
        \STATE Optimize policy with SPI (Algorithm \ref{mfree_policy_improvement} or \ref{mbased_policy_improvement}): 
                $$\pi = \texttt{SPI}(\pi, D, \rho_{+}, \delta)$$
    \ENDFOR
    \RETURN $\pi$
\end{algorithmic}
\end{algorithm}

\begin{algorithm} 
\caption{Safe Model-free Policy Improvement (SMFPI)} 
\label{mfree_policy_improvement} 
\begin{algorithmic} 
    \REQUIRE    Initial policy $\pi$, 
                data $D \sim \mu_{(\cdot | \pi)}$, 
                safety thresholds $\rho^i_{+}$, confidence $\delta$
    \STATE $\pi' = \pi$
    \STATE Split $D$ into $D_{train}$ and $D_{test}$.
    \FOR{$L$ epochs}
        \STATE Perform $g$ policy updates of policy $\pi'$ with data $D_{train}$
        \STATE Upper bound costs: $\forall i=1..k ~,~ \rho^i(\pi') = HCOPE(\pi', D_{test}, \delta / k)$
        \IF{$\forall i \in [1,k] \, \rho^i \leq \rho_{+}^i$}
            \RETURN $\pi'$
        \ENDIF
    \ENDFOR
    \RETURN $\pi$ \hfill\COMMENT{no better policy found}
\end{algorithmic}
\end{algorithm}

\begin{algorithm} 
\caption{Safe Model-based Policy Improvement (SMBPI)} 
\label{mbased_policy_improvement} 
\begin{algorithmic} 
    \REQUIRE    Initial policy $\pi$, 
                data $D \sim \mu_{(\cdot | \pi)}$, 
                safety thresholds $\rho^i_{+}$, confidence $\delta$
    \STATE Initialize or reuse predictive model $p_\theta$
    \STATE Empty model buffer $D_{model}$
    \STATE $\pi' = \pi$
    \STATE Split $D$ into $D_{train}$ and $D_{test}$.
    \FOR{$L$ epochs}
        \STATE Fit dynamics model $p_\theta$ on dataset $D_{train}$ 
        $$\theta \leftarrow \argmax_{\theta} \EE [ \text{log} \, p_\theta(s', r, c | s, a) ]$$
        \STATE Collect $h$-step predicted trajectories with $\pi'$ and store in $D_{model}$
        \STATE Perform $g$ policy updates of policy $\pi'$ with data $D_{model}$
        \STATE Upper bound costs: $\forall i=1..k ~,~ \rho^i(\pi') = HCOPE(\pi', D_{test}, \delta / k)$
        \IF{$\forall i=1..k \, \rho^i \leq \rho_{+}^i$}
            \RETURN $\pi'$
        \ENDIF
    \ENDFOR
    \RETURN $\pi$ \hfill\COMMENT{no better policy found}
\end{algorithmic}
\end{algorithm}


\section{Experiments}
In this section, we provide an empirical evaluation of the proposed approach. Since it relies on two distinct contributions, the shaping reward and the correct-by-construction policy improvement, we structure the experiments in two phases to evaluate each component in isolation.

\subsection{Automatic Reward Shaping from Task Specification}
We evaluate the proposed reward shaping against two standard RL benchmarks:
(1)~\emph{The cart-pole} environment, which has already been presented as a motivating example; (2)~\emph{The bipedal-walker}, whose main objective is to move forward towards the end of the field without falling. We formulate one safety, one target, and four comfort requirements for the bipedal walker.

Aiming to evaluate the automatic reward shaping process from formal task specifications, we benchmark HPRS against two prior approaches using temporal logic.
To formalize the task in a single specification, we consider it as the conjunction of safety and target requirements.
The baselines methods to shape a reward signal are the following:
\begin{itemize}
    \item \emph{TLTL}~\cite{li2017tl-rewards} specifies tasks in a bounded (Truncated) LTL variant, equipped with an infinity-norm quantitative semantics \cite{nickovic2004stl}.
    The quantitative evaluation of the episode is used as a reward. We employ the RTAMT monitoring tool to compute the episode robustness \cite{DBLP:conf/atva/Nickovic2020_RTAMT}.
    
    \item \emph{BHNR}~\cite{IROS/Anand2019StructuredRewardShapingUsingSTL} specifies tasks in a fragment of STL with the filtering semantics of \cite{DBLP:conf/hybrid/Rodionova2016_TemporalLogicAsFiltering}. The reward uses a sliding-window approach to produce more frequent feedback to the agent: at each step, it uses the quantitative semantics to evaluate a sequence of $H$ states.
\end{itemize}

Since each reward formulation has its scale, 
comparing the learning curves needs an external, unbiased assessment metric. 
To this end, we introduce a \textit{policy-assessment metric} (PAM) $F$, 
capturing the logical satisfaction of various requirements. 
We use the PAM to monitor the learning process and 
compare HPRS to the baseline approaches (Fig. \ref{fig:hprs}).

Let $\Phi = \Phi_{S} \uplus \Phi_{T} \uplus \Phi_{C}$ be the set of requirements defining the task. 
Then, we define $F$ as follows:
$$
F(\Phi, \tau) =  
\sigma(\Phi_S, \tau) + 
\tfrac{1}{2} \sigma(\Phi_T, \tau) + 
\tfrac{1}{4}\sigma_{avg}(\Phi_C, \tau)
$$

\noindent where $\sigma(\Phi, \tau)\,{\in}\,\{0,1\}$ is the satisfaction function evaluated over $\Phi$ and $\tau$. We also define a time-averaged version for any comfort requirement $\varphi = \texttt{encourage } p$, as follows:
$$
\sigma_{avg}(\varphi, \tau) = \sum_{i=1}^{|\tau|} \frac{\ONE(s_i \models p)}{|\tau|} .
$$

\noindent{}Its set-wise extension computes the average over the set.

\begin{corollary}
\label{pam_corollary}
Consider a task $(\Phi, \strictorder)$ and an episode $\tau$. Then, the following relations hold for $F$:\vspace*{-1mm}
\[\begin{array}{l}
F(\Phi, \tau) \geq 1.0 \leftrightarrow \tau \models \Phi_S,\quad 
F(\Phi, \tau) \geq 1.5 \leftrightarrow \tau \models \Phi
\end{array}\]
\end{corollary}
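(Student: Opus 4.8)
The plan is to exploit the bounded ranges of the three additive terms in $F$ together with the geometric weight scheme $1, \tfrac12, \tfrac14$, which encodes the hierarchy lexicographically: the maximal total contribution from all strictly lower-priority terms is always strictly smaller than the weight of the next level up. Concretely, I would first record the ranges $\sigma(\Phi_S,\tau)\in\{0,1\}$, $\tfrac12\sigma(\Phi_T,\tau)\in\{0,\tfrac12\}$, and $\tfrac14\sigma_{avg}(\Phi_C,\tau)\in[0,\tfrac14]$, where the last holds because $\sigma_{avg}$ averages indicator values and hence lies in $[0,1]$. Together these give $F\in[0,\tfrac74]$, and every threshold argument below is just a comparison against the gaps this partition creates.

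For the first equivalence $F\geq 1.0 \leftrightarrow \tau\models\Phi_S$, the backward direction is immediate: if $\tau\models\Phi_S$ then $\sigma(\Phi_S,\tau)=1$ and the remaining terms are nonnegative, so $F\geq 1$. For the forward direction I would argue by contraposition: if $\tau\not\models\Phi_S$ then $\sigma(\Phi_S,\tau)=0$, and the two surviving terms sum to at most $\tfrac12+\tfrac14=\tfrac34<1$, forcing $F<1$.

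For the second equivalence $F\geq 1.5\leftrightarrow\tau\models\Phi$, I would first reduce $\tau\models\Phi$ to $\tau\models\Phi_S\wedge\tau\models\Phi_T$, invoking the comfort semantics stated earlier: every $\texttt{encourage}$ requirement evaluates to true, so $\tau\models\Phi_C$ holds vacuously and drops out of the conjunction in the task-satisfaction definition~\eqref{eq:task_sat}. The backward direction then yields $\sigma(\Phi_S,\tau)=\sigma(\Phi_T,\tau)=1$, hence $F\geq 1+\tfrac12=1.5$. For the forward direction, again by contraposition, if $\sigma(\Phi_S,\tau)=0$ or $\sigma(\Phi_T,\tau)=0$ then $\sigma(\Phi_S,\tau)+\tfrac12\sigma(\Phi_T,\tau)\leq 1$, and adding the comfort contribution of at most $\tfrac14$ gives $F\leq\tfrac54<1.5$.

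I expect no genuine obstacle here; the proof is a short bounding argument. The only points requiring care are the biconditional nature of both claims, which forces me to treat each direction separately, and the explicit appeal to the comfort semantics so that the $1.5$ threshold genuinely characterizes \emph{full} task satisfaction rather than merely safety-and-target satisfaction. The substantive content is simply that the weights $1,\tfrac12,\tfrac14$ leave a strict numerical gap between consecutive thresholds, which is exactly the hierarchical encoding these weights were chosen to produce.
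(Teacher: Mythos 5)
Your proof is correct, and it is precisely the argument the paper implicitly relies on: the paper states this corollary without any explicit proof, the choice of weights $1,\tfrac12,\tfrac14$ being designed exactly so that the lower-priority terms (bounded by $\tfrac12+\tfrac14=\tfrac34$ and $\tfrac14$ respectively) cannot bridge the gap to the next threshold. Your treatment of both directions of each biconditional, and the explicit reduction of $\tau \models \Phi$ to $\tau \models \Phi_S \wedge \tau \models \Phi_T$ via the vacuous semantics of \texttt{encourage}, fills in the details the paper leaves to the reader; nothing is missing.
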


\begin{figure}
    \centering
    \includegraphics[width=\textwidth]{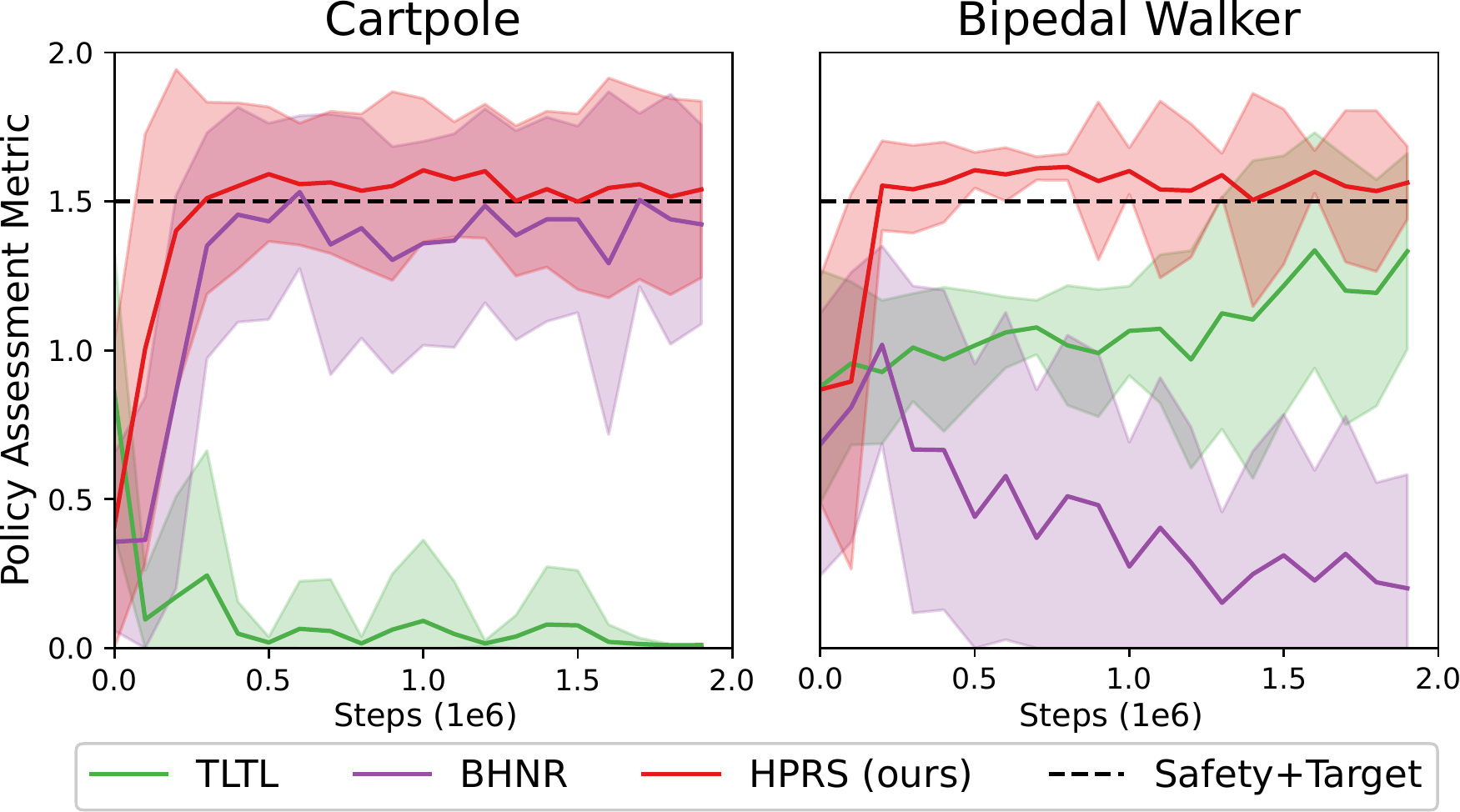}
    \caption{Performance evaluation with respect to the Policy Assessment Metric (PAM). This external metric is not used in training but to provide a sound evaluation of the requirements and accounting for the different rewards scales. The threshold indicates the task satisfaction (dashed line), as explained in Corollary \ref{pam_corollary}. All the curves report mean and standard deviation over five seeds.}
    \label{fig:hprs}
\end{figure}

\subsection{Safe Policy Improvement in Online Setting}

We evaluate the Safe Policy Improvement on a simple cart-pole task, 
where the agent target is to remain close to the goal location 
while keeping the pole safely balanced for an episode of $200$ steps.
We benchmark the proposed algorithm SMFPI with Vanilla Policy Gradient (VPG), which performs unconstrained updates based on the gradient estimates.
We consider two scenarios of hyperparameter configurations,
respectively with a favorable and unfavorable value of the \textit{learning rate} ($LR$).
Figure \ref{fig:exp_hcope} shows the performance as the mean and standard deviation of expected return and cost, aggregated over many runs.
We also report a running estimate of the expected cost for each run
to compare the oscillation of cost during the iterative policy updates.

\begin{figure}
    \centering
    \includegraphics[width=\textwidth]{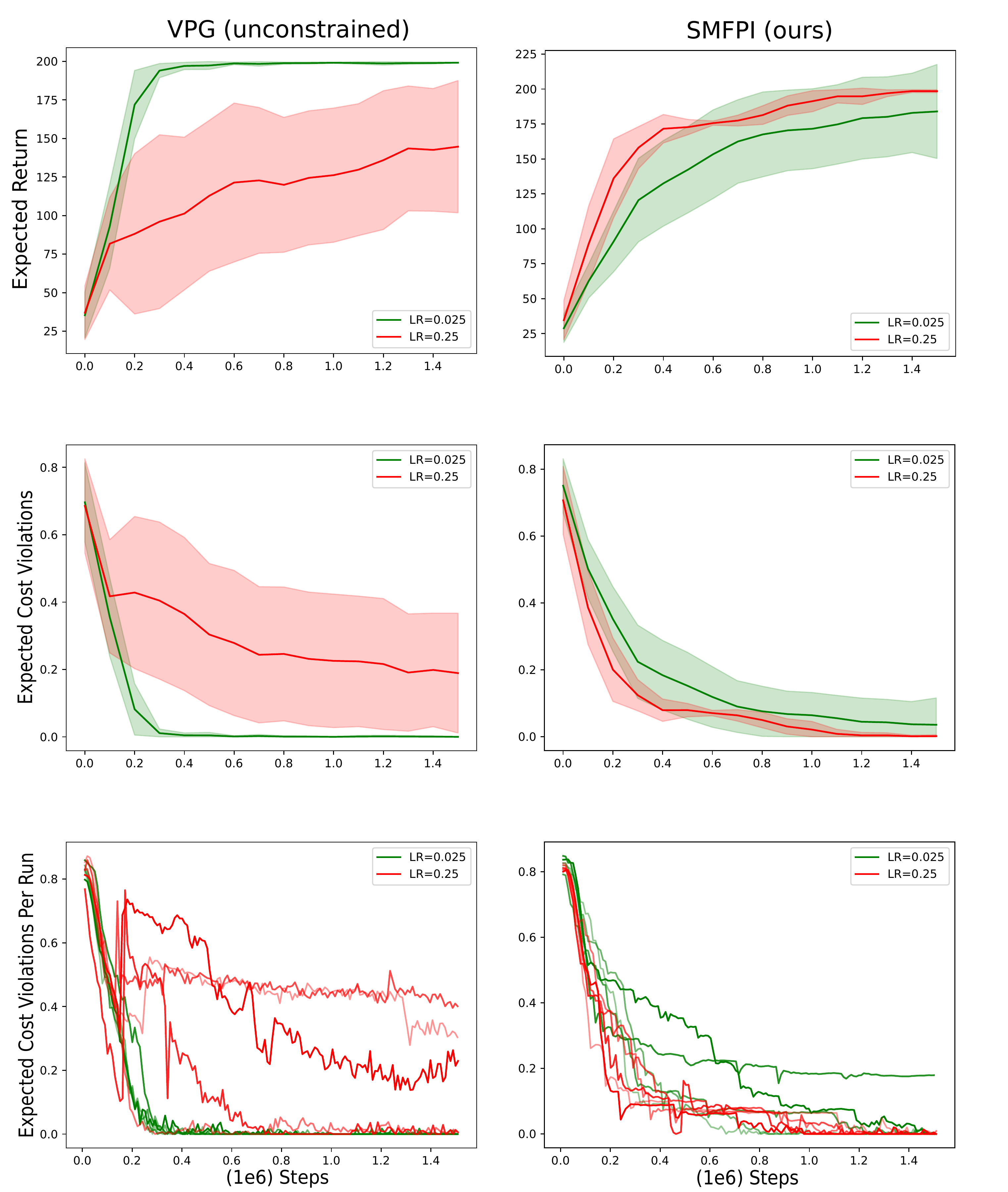}
    \caption{Performance for cumulative reward and cost violations under different hyperparameter configurations with learning rate ($LR$) $0.025$ and $0.25$. The aggregated performance in the first two rows indicates the mean and standard deviation over five runs. The last row shows the running cost estimate on the current batch for each of the five runs.}
    \label{fig:exp_hcope}
\end{figure}

Under a favorable hyperparameter setting $LR\,{=}\,0.025$, 
we can observe that both algorithms iteratively 
increase the expected return and decrease the expected cost.
As expected, VPG converges faster to the optimal value.
However, under the unfavorable hyperparameter setting $LR\,{=}\,0.25$,
the unconstrained policy update of VPG leads to high variance in the statistics and catastrophic oscillation in the cost per run.
Conversely, SMFPI shows a good policy improvement and much lower variance in the learning curve due to a more conservative policy update.
We also observe a monotonic decrease in expected cost,
empirically demonstrating the effectiveness of the proposed approach.
We observe small statistical fluctuations in the estimated cost per run,
which could depend on two factors: (1)~Online estimation of the cost on the current batch of data, (2)~Use of $\delta\,{=}\,0.05$, which guarantees a safe update up to certain confidence.
When the current batch contains few samples, 
the variance of the visualized cost estimate is high.
However, SMFPI does not update the policy in small-data regimes because it cannot guarantee the improvement with sufficient confidence.

In this experiment, we show that the proposed algorithm is 
effective for safely learning a decision policy.
Starting from a random policy, SMFPI can iteratively update 
the policy until it reaches the optimal return value
without showing any drop in expected cost.
Its performance is robust to different hyperparameters settings,
which plays a critical role in modern deep RL algorithms.
Compared with unconstrained RL algorithms under optimal hyperparameters, the convergence of SMFPI is slower. 
However, finding the correct hyperparameters by trial and error is not always an option in critical applications.


\section{Conclusion and Future Work}
In this work, we presented a policy-synthesis pipeline, showing how to formalize a CMDP, starting from a set of formal requirements describing the task.
We further enrich the reward with a potential function that does not alter
the policy optimality under the standard potential-based shaping.
We further define correct-by-construction policy-improvement routines and propose two online algorithms, one model-free and one model-based, to solve the CMDP starting from an unsafe policy.
Using high-confidence off-policy evaluation, we can guarantee that the proposed algorithms will return an equally good or improved policy with respect to the safety constraints.
We finally evaluate the overall pipeline, combining reward shaping and correct-by-construction policy improvement, and show empirical evidence of their effectiveness.
The proposed algorithm is robust under different hyperparameters tuning, while unconstrained baselines perform updates that deteriorate the safety performance and preclude the algorithm from converging to optimal performances.

Compared with previous approaches that mainly focus on a batch setting,
we propose to use high-confidence off-policy evaluation online.
In future work, we intend to investigate the proposed model-based approach regarding data efficiency and scalability of off-policy evaluation in an online setting.
We plan to extend the benchmarks with more complex tasks, targeting robotics applications and autonomous driving, study the convergence property more in-depth, and characterize the eventual distance to the optimal policy.

\section*{Acknowledgement}
Luigi Berducci is supported by the Doctoral College Resilient Embedded Systems.
This work has received funding from the Austrian FFG-ICT project ADEX.

%
%

\bibliographystyle{splncs04}
\bibliography{refs.bib}

\end{document}